
\documentclass[letterpaper, 10 pt, conference]{ieeeconf}  

\IEEEoverridecommandlockouts                              
\overrideIEEEmargins

\makeatletter

\let\proof\@undefined
\let\endproof\@undefined

\makeatother

\usepackage{graphicx}
\usepackage{caption}
\usepackage{subcaption}
\usepackage{enumerate}
\usepackage{hyperref}
\usepackage{upgreek}
\usepackage{array}
\usepackage{multirow}
\usepackage{amsmath} 
\usepackage{amssymb}  
\usepackage{amsthm}  
\usepackage{bm}
\usepackage{lipsum}
\usepackage[linesnumbered, ruled]{algorithm2e}
\usepackage{color}
\usepackage{cite}

\newtheorem{proposition}{Proposition}

\newtheorem{lemma}{Lemma}

\newcommand{\dist}{\mathrm{dist}}

\newcommand{\cvar}{\mathrm{CVaR}}

\title{\LARGE \bf
Learning-Based Distributionally Robust Motion Control\\ with Gaussian Processes
\thanks{This work was supported in part by  the Creative-Pioneering Researchers Program through SNU, the Basic Research Lab Program through the National Research Foundation of Korea funded by the MSIT(2018R1A4A1059976), and Samsung Electronics.}
}

\author{Astghik Hakobyan
\and
Insoon Yang 
\thanks{
A. Hakobyan and I. Yang are with the Department of Electrical and Computer Engineering, Automation and Systems Research Institute,  Seoul National University, Seoul 08826, South Korea, 
        {\tt\small \{astghikhakobyan, insoonyang\}@snu.ac.kr}}%
}

\begin{document}

\maketitle
\thispagestyle{empty}
\pagestyle{empty}


\begin{abstract}
Safety is a critical issue in learning-based robotic and autonomous systems as learned information about their environments is often unreliable and inaccurate. 
In this paper, we propose a risk-aware motion control tool that is robust against errors in learned distributional information about obstacles moving with unknown dynamics. 
The salient feature of our model predictive control (MPC) method is its capability of limiting the risk of unsafety even when the true distribution deviates from the distribution estimated by Gaussian process (GP) regression, within an \emph{ambiguity set}.
Unfortunately, the distributionally robust MPC problem with GP is intractable because the worst-case risk constraint involves an infinite-dimensional optimization problem over the ambiguity set. 
To  remove the infinite-dimensionality issue, we develop a  systematic reformulation approach exploiting modern distributionally robust optimization techniques.
The performance and utility of our method are demonstrated through simulations using a nonlinear car-like vehicle model for autonomous driving.
\end{abstract}


\section{Introduction}

The adoption of learning-based decision-making tools for the intelligent operation of
mobile robots and autonomous systems is rapidly growing because of advances in machine learning, sensing, and computing technologies.  
By learning its uncertain and dynamic environment, a robot can use additional information to improve the control performance. 
However, the accuracy of inference is often poor, as it is subject to the quality of the observations, statistical models, and learning methods.
Employing inaccurately learned information in the robot's decision making may cause catastrophic system behaviors, in particular, leading to collision. 
The focus of this work is to develop an optimization-based method
for safe motion control that is robust against errors in learned information about obstacles moving with unknown dynamics.

Learning-based control methods for mobile robots and autonomous systems can be categorized into two classes. 
The first class learns unknown system models, while the second class learns unknown environments.
Control methods that learn unknown system dynamics typically use model predictive control (MPC)~\cite{Aswani2013, di2013stochastic, Ostafew2016, Williams2018, hewing2019cautious} and model-based reinforcement learning (RL)~\cite{Hester2012, venkatraman2016improved, Polydoros2017}.  
These tools employ various learning or inference techniques to update unknown system model parameters that are, in turn, used to improve control actions or policies.
On the other hand, the methods in the second class put more emphasis on ``learning the environment" rather than ``controlling the robot". 
In particular, for learning the behavior (or intention) of obstacles or other vehicles, 
several methods have been proposed that use inverse RL~\cite{Kuderer2015, Herman2015, Wulfmeier2017}, imitation learning~\cite{Kuefler2017, codevilla2018end}, and Gaussian mixture models~\cite{chernova2007confidence, Lenz2017}, among others.

Our method is classified as the second since it learns the movement of obstacles.
However, departing from the previous approaches, we emphasize the importance of ``control"  in correcting potential errors in ``learning".
The key idea is to determine the motion control action that is robust against errors in learned information about the obstacles' motion. 
Specifically, our method uses Gaussian process (GP) regression~\cite{rasmussen2003gaussian} to estimate the probability distribution of the obstacles' locations for future stages based on the current and past observations. 
To actively take into account the possibility that the learned distribution information may be inaccurate, 
we propose a novel MPC method that
optimizes the motion control action subject to
constraints on the \emph{risk of unsafety} evaluated under the worst-case distribution in a so-called \emph{ambiguity set}.
Thus, the resulting control action will satisfy the risk constraints for safety even when the true distribution deviates from the learned one within the ambiguity set. 

Unfortunately, the distributionally robust MPC (DR-MPC) problem is challenging to solve since the worst-case risk constraint involves an infinite-dimensional optimization problem over the ambiguity set of probability distributions. 
To resolve this issue, we propose a reformulation approach using $(i)$ modern distributionally robust optimization techniques based on Kantorovich duality~\cite{esfahani2018data}, $(ii)$ the extremal representation of conditional value-at-risk, and $(iii)$ a geometric expression of the distance to the union of half-spaces. 
The reformulated DR-MPC problem is finite-dimensional and can be efficiently solved by using existing nonlinear programming algorithms.
Through simulations using a nonlinear car-like vehicle model
for collision-avoidance racing, 
we empirically show that, unlike the standard non-robust version, 
our method preserves safety even with moderate errors in the results of GP regression. 

The remainder of the paper is organized as follows. 
In Section~\ref{s3}, we present a GP regression approach to learning the motion of obstacles.
In Section~\ref{s4}, we introduce the learning-based DR-MPC method with a tractable reformulation technique.
The simulation results for collision-avoidance racing are presented
in Section~\ref{s5}.

\section{Learning the Movement of Obstacles Using Gaussian Processes}\label{s3}

\subsection{Obstacle Model}\label{s3ss1}

We consider a rigid body obstacle interfering with the motion of a mobile robot in $\mathbb{R}^{n_y}$. 
The obstacle state $\mathrm{x}_o(t)\in\mathbb{R}^{n_\mathrm{x}}$ is defined as the position and orientation of an arbitrary point on the obstacle. Thus, the obstacle state evolves with
\begin{equation}
\mathrm{x}_o(t+1)=\mathrm{x}_o(t)+T_o \mathrm{v}_o(\mathrm{x}_o(t)), \label{obs_model}
\end{equation}
where $\mathrm{v}_o(\mathrm{x}_o(t))\in\mathbb{R}^{n_\mathrm{x}}$ is the vector of the obstacle's velocity, and $T_o$ is the sample time. For ease of exposition, we describe the case of a single obstacle, but our method is valid in multi-obstacle cases as well.

Having the obstacle's state vector, as well as its geometric parameters, the region occupied by the obstacle at stage $t$ can be modeled (or over-approximated if necessary) as a convex polytope defined by $m$ number of half-spaces:
\begin{equation}
\mathcal{O}(t) :=\{\mathbf{x}\in\mathbb{R}^{n_y} \mid G_{t} \mathbf{x}\leq g_{t}\}.\label{obs_reg}
\end{equation}
Here, $G_{t}\in \mathbb{R}^{m\times n_y}$ and $g_{t}\in\mathbb{R}^m$ are found from the geometry of the obstacle and the current state by $G_t=\mathrm{G}(\mathrm{x}_o(t))$ and $g_t=\mathrm{g}(\mathrm{x}_o(t))$.

For example, for a car-like obstacle in a 2D environment, the state can be chosen as the coordinate and angle of an arbitrary point on the obstacle.
 However, by symmetry, the simplest motion pattern will be obtained for the three candidate states that are shown in Fig.~\ref{obs_def}. The region occupied by the obstacle is over-approximated as a rectangle, the parameters of which can be found using the geometry of the vehicle and any of the three states.
To find the $G_t$ and $g_t$, we need to know the exact expression of $\mathrm{v}_o$. However, in practice it is impossible for a robot to have full knowledge of its environment, in particular, the behavior of the obstacle. 
For predicting the obstacle's motion, 
we use the GP regression approach introduced in the following subsection.

\begin{figure}[t]
  \centering
  \includegraphics[width=0.5\linewidth]{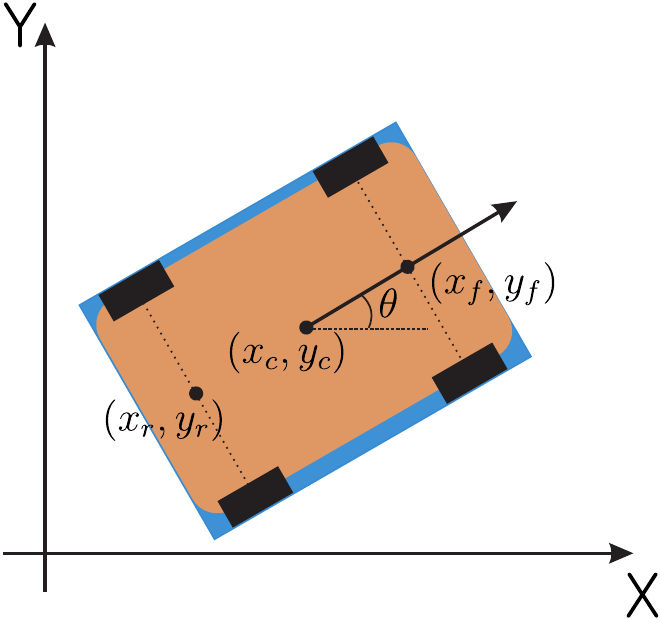}
  \caption{Car-like obstacle in 2D environment. By symmetry, the simplest motion pattern will be obtained for the following three candidates of state: $[x_r,y_r,\theta]^\top$, $[x_f,y_f,\theta]^\top$ and $[x_c,y_c,\theta]^\top$, where $(x_r,y_r)$, $(x_f,y_f)$ and $(x_c,y_c)$ are the coordinates of the center of the rear axle, front axle, and center of mass, respectively, with $\theta$ as the heading angle. The region occupied by the vehicle is over-approximated by the blue rectangle.}
  \label{obs_def}
\end{figure}

\subsection{Gaussian Process Regression}\label{s3ss2}
GP regression is a nonparametric Bayesian approach to regression  and infers a probability distribution over all possible values of a function given some training data~\cite{rasmussen2003gaussian}. 
A GP is a collection of random variables, any finite number of which have a joint Gaussian distribution. 
In this work, GP regression is used for predicting the noisy velocity function $\mathrm{v}_o(\mathrm{x}_o(t))$ from previous observations of the obstacle's behavior.

We choose the training input data as
$\hat{\mathrm{x}}=\{\mathrm{x}_o(t-1),\mathrm{x}_o(t-2),\dots,\mathrm{x}(t-M)\}$, consisting of the obstacle's state for $M$ previous stages. 
The corresponding measured velocities $\hat{\mathrm{v}}$ are selected as the training output data. 
In reality, we do not have access to function values; instead, the following noisy observations are available: for the $i$th observation
\[
\hat{\mathrm{v}}^{(i)}=\mathrm{v}_o(\hat{\mathrm{x}}^{(i)})+\varepsilon, \quad i=1, \ldots, M,
\]
where $\hat{\mathrm{x}}^{(i)} := \mathrm{x}_o (t-i)$, and 
$\varepsilon$ is an i.i.d. zero-mean Gaussian noise with covariance $\Sigma^\varepsilon=\mathrm{diag}([\sigma_{\varepsilon,1}^2\;\sigma_{\varepsilon,2}^2,\dots,\sigma_{\varepsilon,n_\mathrm{x}}^2])$.

 Since the velocities in different dimensions are assumed to be independent, each of them can be learned individually. The  dataset for the $j$th dimension is thus constructed as
\[
\mathcal{D}_j=\big\{\big(\hat{\mathrm{x}}^{(i)},\hat{\mathrm{v}}_j^{(i)}\big),\;i=1,\dots,M\big\}.
\]

For each dimension of output $\mathrm{v}_o(\cdot)$, we specify a GP prior with mean function $m_j(x)$ and kernel function $k_j(x,x')$.  In this paper, we use an RBF kernel that is defined by
\[
k_j(x,x')=\sigma_{f,j}^2\exp\Big[-\frac{1}{2}(x-x')^\top L_j^{-1}(x-x')\Big],
\]
where $L_j$ is a diagonal length scale matrix and $\sigma_{f,j}^2$ is the signal variance.
The prior on the noisy observations is a normal distribution with mean function $m_j(\hat{\mathrm{x}}^{(i)})$ and covariance function $K_j(\hat{\mathrm{x}},\hat{\mathrm{x}})+\sigma_{\varepsilon,j}^2I$, where $K_j(\hat{\mathrm{x}},\hat{\mathrm{x}}) \in \mathbb{R}^{M \times M}$ denotes the  covariance matrix of training input data, i.e., $K_j^{(l,k)}(\hat{\mathrm{x}},\hat{\mathrm{x}})=k_j(\hat{\mathrm{x}}^{(l)},\hat{\mathrm{x}}^{(k)})$.

It follows that the joint distribution of the training output data $\hat{\mathrm{v}}_j$ and the output $\mathbf{v}_j$ at an arbitrary test point $\mathbf{x}$ is given by
\[
\begin{bmatrix}
\hat{\mathrm{v}}_j \\ \mathbf{v}_j
\end{bmatrix}\sim \mathcal{N}\bigg(\begin{bmatrix}m_j(\hat{\mathrm{x}})\\m_j(\mathbf{x})\end{bmatrix},\begin{bmatrix}K_j(\hat{\mathrm{x}},\hat{\mathrm{x}})+\sigma_{\varepsilon,j}^2I & K_j(\hat{\mathrm{x}},\mathbf{x})\\ K_j(\mathbf{x},\hat{\mathrm{x}}) & k_j(\mathbf{x},\mathbf{x}) \end{bmatrix}\bigg),
\]
where $K_j^{(l)}(\hat{\mathrm{x}},\mathbf{x}) =k_j(\hat{\mathrm{x}}^{(l)},\mathbf{x})$, and $K_j(\mathbf{x},\hat{\mathrm{x}}) = K_j(\hat{\mathrm{x}},\mathbf{x})^\top$.
As a result, the posterior distribution of the output in the $j$th dimension at an arbitrary test point $\mathbf{x}$ conditioned on the observed data is Gaussian, with the following mean and covariance:
\begin{align}
&\bm{\mu}_\mathrm{v}^j(\mathbf{x}) : =m_j(\mathbf{x}) \nonumber\\
& \qquad +K_j(\mathbf{x},\hat{\mathrm{x}})(K_j(\hat{\mathrm{x}},\hat{\mathrm{x}})+\sigma_{\varepsilon,j}^2I)^{-1}(\hat{\mathrm{v}}_j-m_j(\hat{\mathrm{x}})),\label{mean}\\
&\bm{\Sigma}_\mathrm{v}^j(\mathbf{x})=k_j(\mathbf{x},\mathbf{x}) \nonumber\\
& \qquad -K_j(\mathbf{x},\hat{\mathrm{x}})(K_j(\hat{\mathrm{x}},\hat{\mathrm{x}})+\sigma_{\varepsilon,j}^2I)^{-1}K^j(\hat{\mathrm{x}},\mathbf{x}).\label{cov}
\end{align}

\begin{figure*}[t!]
\centering
\begin{subfigure}{0.29\linewidth}
  \centering
  \includegraphics[width=\linewidth]{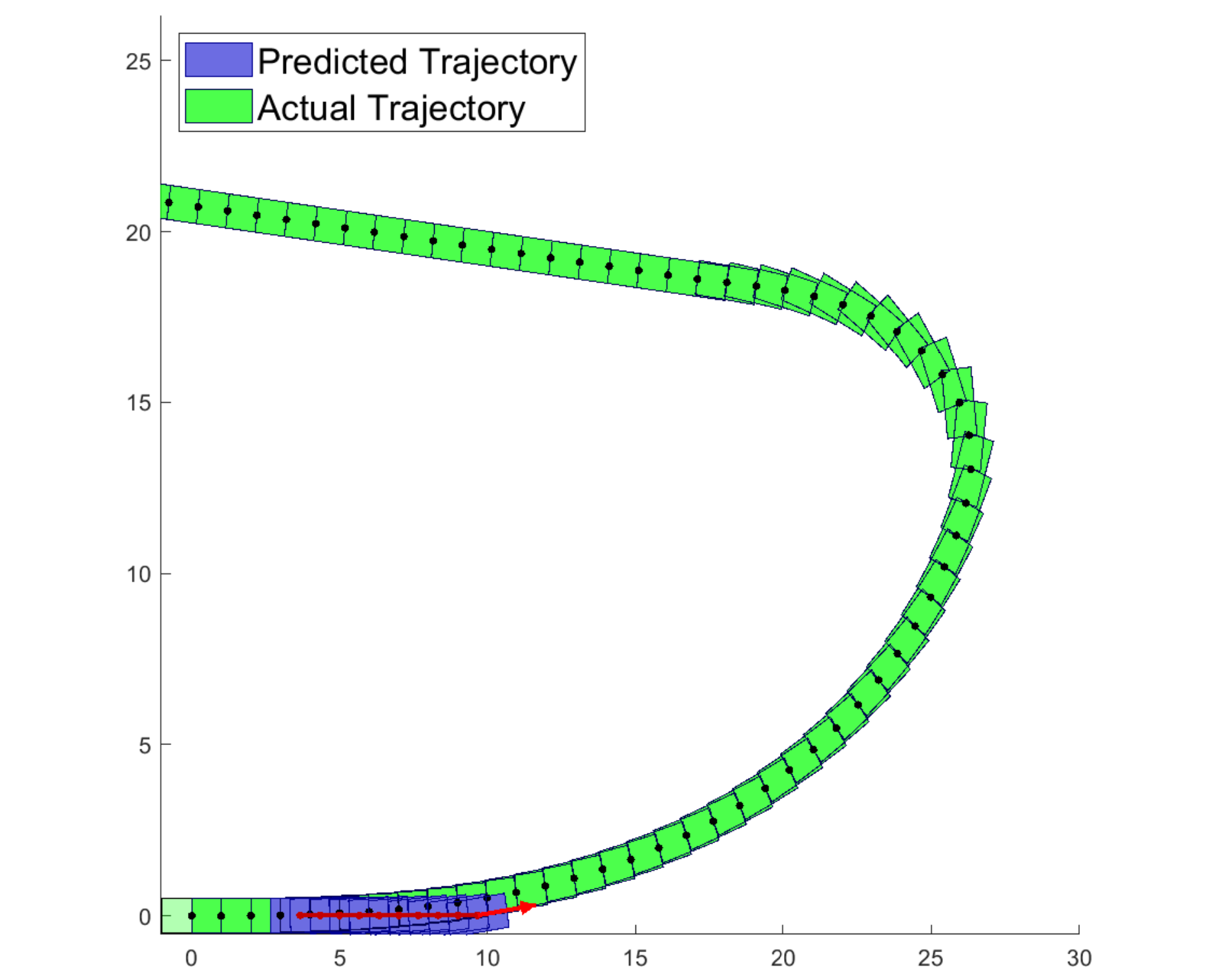}
  \subcaption{$t=5$}
  \label{GP_fig1}
\end{subfigure}%
\begin{subfigure}{0.29\linewidth}
  \centering
  \includegraphics[width=\linewidth]{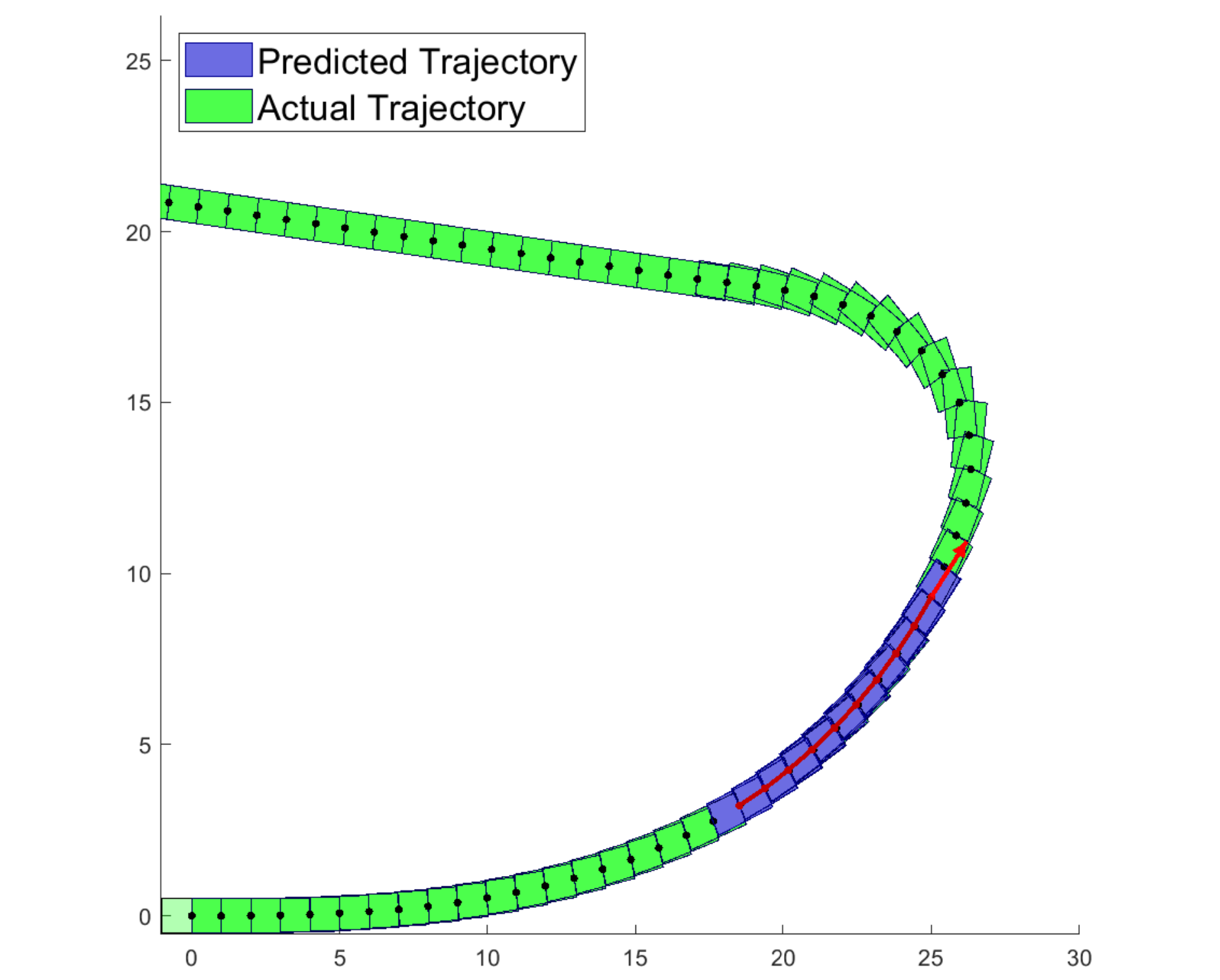}
    \subcaption{$t=20$}
    \label{GP_fig2}
\end{subfigure}%
\begin{subfigure}{0.29\linewidth}
  \centering
  \includegraphics[width=\linewidth]{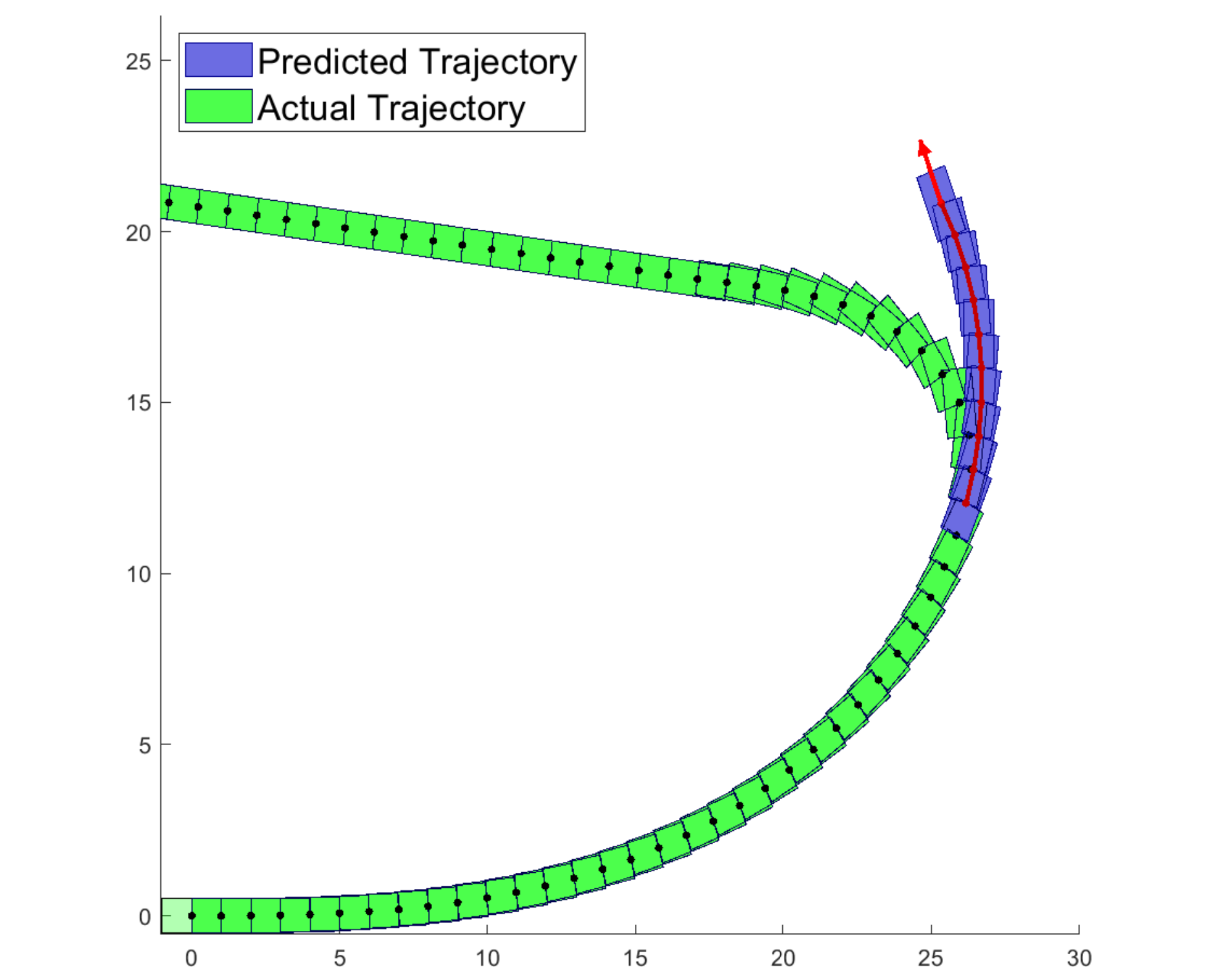}
    \subcaption{$t=32$}
    \label{GP_fig3}
\end{subfigure}
\caption{Predicted mean trajectories of a car-like obstacle for the next $10$ stages.}
\label{GP_fig}
\vspace{-0.15in}
\end{figure*}

The resulting GP approximation of  $\mathrm{v}_o$ is then given by
\[
\mathbf{v}(\mathbf{x})\sim\mathcal{GP}(\bm{\mu}_\mathrm{v}(\mathbf{x}), \bm{\Sigma}_\mathrm{v}(\mathbf{x})),
\]
where $\bm{\mu}_\mathrm{v}(\mathbf{x})=[ \bm{\mu}_\mathrm{v}^1(\mathbf{x}),\dots, \bm{\mu}_\mathrm{v}^{n_\mathrm{x}}(\mathbf{x})]^\top$, and $\bm{\Sigma}_\mathrm{v}(\mathbf{x})=\mathrm{diag}([\bm{\Sigma}_\mathrm{v}^1(\mathbf{x}),\dots, \bm{\Sigma}_\mathrm{v}^{n_{\mathrm{x}}}(\mathbf{x})])$.

\subsection{Prediction of Obstacle's Motion}\label{s3ss3}

Assuming that  $\mathrm{x}_o(0)\sim\mathcal{N}(\mu_\mathrm{x}(0),\mathbf{0})$, it is straightforward to check that $\mathrm{x}_o(t)$ is normally distributed at each stage $t$ with mean $\mu_\mathrm{x}(t)$ and covariance $\Sigma_\mathrm{x}(t)$ to be specified. Having the posterior of the velocity vector, the state of the obstacle at the next stage can be predicted by considering the following joint distribution of the state and velocity vectors~\cite{hewing2019cautious}:
\[
\begin{bmatrix}
\mathrm{x}_o(t)\\ \mathbf{v}(t)
\end{bmatrix}\sim\mathcal{N}\Big(
\begin{bmatrix}
\mu_\mathrm{x}(t)\\
\mu_\mathrm{v}(t) 
\end{bmatrix},\begin{bmatrix}\Sigma_\mathrm{x}(t) & \Sigma_\mathrm{xv}(t)\\
\Sigma_\mathrm{vx}(t) & \Sigma_\mathrm{v}(t)\end{bmatrix}\Big).
\]

Following procedures in~\cite{girard2002gaussian} and \cite{girard2003gaussian} and applying the first-order Taylor approximation to \eqref{mean} and \eqref{cov} with Gaussian input $\mathrm{x}_o(t)\sim\mathcal{N}(\mu_{\mathrm{x}}(t),\Sigma_{\mathrm{x}}(t))$ yields the following approximate mean and covariance functions:
\begin{equation}
\begin{split}
&\tilde{\mu}_{\mathrm{v}}(t)= \bm{\mu}_{\mathrm{v}}(\mu_{\mathrm{x}}(t))\\
&\tilde{\Sigma}_{\mathrm{v}}(t)= \bm{\Sigma}_{\mathrm{v}}(\mu_{\mathrm{x}}(t))+\nabla \bm{\mu}_{\mathrm{v}}(\mu_{\mathrm{x}}(t))\Sigma_{\mathrm{x}}(t) \nabla \bm{\mu}_{\mathrm{v}}(\mu_{\mathrm{x}}(t))^\top\\
&\tilde{\Sigma}_{\mathrm{xv}}(t)=\Sigma_{\mathrm{x}}(t)\nabla \bm{\mu}_{\mathrm{v}}(\mu_{\mathrm{x}}(t))^\top.
\end{split}\label{Taylor_app}
\end{equation}

Now, it follows from \eqref{obs_model} that the obstacle's state at the next stage is also normally distributed with the following  mean and covariance:
\begin{equation}
\begin{split}
&{\mu}_{\mathrm{x}}(t+1)= {\mu}_{\mathrm{x}}(t)+T_o {\mu}_{\mathrm{v}}(t)\\
&{\Sigma}_{\mathrm{x}}(t+1)=  {\Sigma}_{\mathrm{x}}(t)+T_o^2 {\Sigma}_{\mathrm{v}}+T_o({\Sigma}_{\mathrm{xv}}+{\Sigma}_{\mathrm{vx}}).
\end{split}\label{update_eq}
\end{equation}
Using \eqref{Taylor_app} and \eqref{update_eq},
the approximate mean and variance of $\mathrm{x}_o(t)$ can be updated.

Having the inferred or predicted obstacle state $\mathrm{x}_o(t)$,
 it is straightforward to obtain $g_t$ and $G_t$ in \eqref{obs_reg}
 as $g_t=\mathrm{g}(\mathrm{x}_o(t))$ and $G_t=\mathrm{G}(\mathrm{x}_o(t))$.
An example of predicting the motion of an obstacle is shown in Fig.~\ref{GP_fig}, where a car-like vehicle is chosen as the obstacle with unknown dynamics.  GP regression is used to predict the trajectory of the vehicle for the next 10 stages. 
As shown in Fig.~\ref{GP_fig1}, the predicted mean in an early stage ($t = 5$) deviates from the actual trajectory, as there were no observations available. 
As more data are collected, the robot better learns the motion pattern of the car-like obstacle. 
As a result, in Figures~\ref{GP_fig2} the difference between the predicted mean and the actual one is small.

However, in practice, the motion predicted by GP regression can be quite different from the actual movement of an obstacle, for example, when it abruptly changes the heading angle, as in the case of Fig.~\ref{GP_fig3}. 
To guarantee safety even when learning fails, we propose a distributionally robust motion control tool  in the following section.

\section{Learning-Based Distributionally Robust Motion Control}\label{s4}

Consider a mobile robot navigating in $\mathbb{R}^{n_y}$ according to the following discrete-time dynamics:
\begin{align}
\xi(t+1)=f(\xi(t),u(t))\\
y(t)=h(\xi(t),u(t)),
\end{align}
where $\xi(t)\in \mathbb{R}^{n_{\xi}}$ and $u(t)\in\mathbb{R}^{n_u}$ are the robot's state and control inputs, respectively, and $y(t)\in\mathbb{R}^{n_y}$ is the robot's current position in the $n_y$-dimensional configuration space.
At stage $t$, the robot is subject to the following state and control constraints:
\begin{equation}
\xi(t)\in\Xi (t),\quad u(t)\in\mathcal{U}(t),\label{feas_const}
\end{equation}
where $\Xi(t)\subseteq\mathbb{R}^{n_\xi}$ and $\mathcal{U}(t)\subseteq\mathbb{R}^{n_u}$.

The robot's environment changes over time as the obstacle moves according to its unknown dynamics. As introduced in our previous work \cite{hakobyan2019risk}, the \emph{safe region} regarding the obstacle is defined by the complement of the region occupied by it, i.e.
\[
\mathcal{Y}(t):=\mathbb{R}^{n_y}\setminus \mathcal{O}^o(t)\quad \forall t\geq 0,
\]
where $\mathcal{O}^o$ denotes the interior of $\mathcal{O}(t)$.
Our goal is to control the robot while keeping it in the \emph{safe region}, even when the GP-based prediction results are inaccurate.

\subsection{Risk Constraint for Safety}\label{s4ss1}

To systematically measure the risk of collision, we use the notion of \emph{safety risk} introduced in our previous work~\cite{Samuelson2018}.
We first define the \emph{loss of safety} as the deviation of the robot's position from the safe region $\mathcal{Y}(t)$:
\begin{equation}
\dist(y(t),\mathcal{Y}(t)):=\min_{a\in\mathcal{Y}(t)}\|y(t)-a\|_2. \label{dist1}
\end{equation}
For safety, it is ideal to force the robot to stay inside the safe region. However, due to the uncertain movement of the obstacle, such a deterministic approach is often too conservative or infeasible. 
Instead, we employ the conditional value-at-risk (CVaR)~\cite{Rockafellar2002a} to define the safety risk at stage $t$ as 
\[
\cvar_\alpha[\dist(y(t),\mathcal{Y}(t))],
\]
where $\cvar_\alpha (X) := \min_{z \in \mathbb{R}} \mathbb{E} \big [
z + (X-z)^+/(1-\alpha)
\big ]$.\footnote{We let $(\bm{x})^+ := \max \{\bm{x}, 0\}$ throughout this paper.}
The safety risk quantifies the average loss of safety beyond the confidence level $\alpha$. 
Note that CVaR is a \emph{coherent} risk measure in the sense of Artzner {\it et al.}~\cite{Artzner1999} and thus satisfies \emph{axioms} that risk metrics in robotics applications should respect for rationally assessing risk~\cite{Majumdar2017isrr}.
More importantly, CVaR is able to distinguish the worst-case tail events, which is crucial for quantifying rare but unsafe events.

The desired level of safety can be reached by limiting the safety risk by a pre-specified risk tolerance parameter $\delta$:
\begin{equation}
\cvar_\alpha[\dist(y(t),\mathcal{Y}(t))]\leq\delta. \label{orig_cvar}
\end{equation}
This risk constraint is adopted in our MPC for safe motion control in the following subsection. 

\subsection{Wasserstein Distributionally Robust GP-MPC}

The safe region in~\eqref{orig_cvar} depends on $g_{t}$ and $G_{t}$, which define the region occupied by the obstacle at stage $t$. Unfortunately, the distribution of these two parameters is unknown and challenging to directly identify in practice. 
However, having sample data $\{\tilde{\mathrm{x}}^{(1)}_o(t),\tilde{\mathrm{x}}^{(2)}_o(t),\dots,\tilde{\mathrm{x}}^{(N)}_o(t)\}$ generated according to the learned distribution of $\mathrm{x}_o(t)$, it is possible to obtain a sample of $g_{t}$ and $G_{t}$ using
\begin{equation}
\tilde{g}^{(i)}_{t}:=\mathrm{g}(\tilde{\mathrm{x}}^{(i)}_o(t)),\quad \tilde{G}^{(i)}_{t}:=\mathrm{G}(\tilde{\mathrm{x}}^{(i)}_o(t)).\label{gG_eq}
\end{equation}

We can then use the sample data to approximate the safety risk in \eqref{orig_cvar}.
However, making such an approximation using limited data may lead to the violation of the original risk constraint~\eqref{orig_cvar}. 
Instead of directly using the learning result of GP regression, we proposed a motion control method that is robust against errors in the estimated distribution.

For a concrete MPC formulation, we first rewrite the loss of safety~\eqref{dist1} in an equivalent form  using the definition of the safe region, which is a union of half-spaces.

\begin{lemma}\label{lem:dist}
Suppose that the region occupied by the obstacle is given by \eqref{obs_reg}. Then, the loss of safety~\eqref{dist1} can be expressed as
\begin{equation}
\dist(y(t),\mathcal{Y}(t))=\min_{j=1,\dots,m}\bigg\{\frac{\big(g_{t,j}-G_{t,j} y(t)\big)^+}{\|G_{t,j}\|_2}\bigg\}, \label{dist}
\end{equation} 
where $g_{t,j}$ is the $j$th element of $g_t$, and  $G_{t,j}$ is the $j$th row of $G_{t}$. 
\end{lemma}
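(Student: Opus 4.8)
The plan is to exploit the geometric structure of the safe region $\mathcal{Y}(t)$ as a finite union of closed half-spaces and then reduce the distance computation to the elementary point-to-half-space formula.

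First I would rewrite the safe region explicitly. Since the interior of the polytope is $\mathcal{O}^o(t)=\{\mathbf{x}\mid G_t\mathbf{x}<g_t\}$ (every row strict), its complement is exactly the set of points violating at least one of the defining strict inequalities, i.e.
\[
\mathcal{Y}(t)=\mathbb{R}^{n_y}\setminus\mathcal{O}^o(t)=\bigcup_{j=1}^m H_j,\qquad H_j:=\{\mathbf{x}\in\mathbb{R}^{n_y}\mid G_{t,j}\mathbf{x}\geq g_{t,j}\}.
\]
Each $H_j$ is a nonempty closed half-space, provided $G_{t,j}\neq 0$ so that $\|G_{t,j}\|_2>0$.

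Next, I would use the elementary fact that the distance from a point to a finite union equals the smallest of the distances to the individual sets: since the infimum over a union decomposes into the minimum of the infima over each member,
\[
\dist(y(t),\mathcal{Y}(t))=\inf_{a\in\bigcup_j H_j}\|y(t)-a\|_2=\min_{j=1,\dots,m}\;\inf_{a\in H_j}\|y(t)-a\|_2=\min_{j=1,\dots,m}\dist(y(t),H_j).
\]
Finally, I would invoke (or quickly rederive by orthogonal projection onto the bounding hyperplane) the standard closed form for the distance from a point to the half-space $H_j$: if $y(t)$ already satisfies $G_{t,j}y(t)\geq g_{t,j}$ the distance is zero, and otherwise the nearest point lies on the hyperplane $\{\mathbf{x}\mid G_{t,j}\mathbf{x}=g_{t,j}\}$, giving distance $(g_{t,j}-G_{t,j}y(t))/\|G_{t,j}\|_2$. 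Both cases are captured compactly by $(g_{t,j}-G_{t,j}y(t))^+/\|G_{t,j}\|_2$, and substituting into the minimum above yields \eqref{dist}.

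I do not anticipate a genuinely hard step; the only point requiring care is the bookkeeping of open versus closed sets, namely confirming that complementing the \emph{open} polytope $\mathcal{O}^o(t)$ produces \emph{closed} half-spaces with the reversed ($\geq$) inequalities, so that each $H_j$ is nonempty and the infimum defining $\dist(y(t),H_j)$ is attained. The remainder reduces to the routine projection-onto-a-hyperplane computation.
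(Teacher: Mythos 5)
Your proof is correct and follows the same skeleton as the paper's: both decompose the safe region as $\mathcal{Y}(t)=\bigcup_{j=1}^m\{\mathbf{x}\mid G_{t,j}\mathbf{x}\geq g_{t,j}\}$ and reduce the distance to this union to the minimum of the distances to the individual half-spaces. The only divergence is in how the point-to-half-space distance is justified: the paper writes it ``in its dual form'' and appeals to strong duality (primal feasibility plus linearity of the constraints, following the argument of \cite[Lemma 1]{hakobyan2020wasserstein}), whereas you derive $\dist(y(t),H_j)=\big(g_{t,j}-G_{t,j}y(t)\big)^+/\|G_{t,j}\|_2$ by direct orthogonal projection onto the bounding hyperplane. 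Your route is more elementary and self-contained---it needs no duality machinery and no external reference---and you additionally make explicit the open-versus-closed bookkeeping (that complementing the open polytope $\mathcal{O}^o(t)$ yields closed half-spaces with reversed inequalities, nonempty and with attained infima since $G_{t,j}\neq 0$), which the paper leaves implicit. What the paper's duality phrasing buys is consistency with the companion lemma it cites and a template that extends to settings where a closed-form projection is not available; for the present statement, both arguments are equally rigorous.
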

\begin{proof}
The proof is similar to the proof of \cite[Lemma 1]{hakobyan2020wasserstein}, which we briefly summarize here.
Since the safe region is a union of half-spaces, the distance can be written as the shortest distance to the all half-spaces that define the safe region:
\begin{equation}
\dist(y(t),\mathcal{Y}(t))=\min_{j=1,\dots,m}\dist(y(t),\mathcal{Y}_j(t))\label{dist_half}
\end{equation}
where $\mathcal{Y}_j(t)=\{\mathbf{x}\mid G_{t,j}\mathbf{x}\geq g_{t,j}\}$.
The distance to each half-space can then be expressed in its dual form as
\begin{equation}
\dist(y(t),\mathcal{Y}_j(t))=\bigg(\frac{g_{t,j}-G_{t,j} y(t)}{\|G_{t,j}\|_2}\bigg)^+\label{dist_half1}
\end{equation} 
using an argument similar to the proof of \cite[Lemma~1]{hakobyan2020wasserstein}.
Strong duality follows from the fact that the primal problem is feasible and the inequality constraints are linear. 
By substituting \eqref{dist_half1} into \eqref{dist_half}, the result follows.
\end{proof}

We now let 
\begin{equation}\label{cd}
c_{t,j} :=\frac{G_{t,j}}{\|G_{t,j}\|_2}, \quad d_{t,j} := \frac{g_{t,j}}{\|G_{t,j}\|_2}. 
\end{equation}
Using the sample data~\eqref{gG_eq} of $\tilde{g}_t^{(i)}$ and $\tilde{G}_t^{(i)}$, we can then generate a sample $\{(\tilde{c}_{t,j}^{(i)}, \tilde{d}_{t,j}^{(i)})\}_{i=1}^N$ of $(c_{t,j}, d_{t,j})$ according to the definition above.
Let $\mathrm{Q}_t$ be the joint empirical distribution of $(c_{t}, d_{t}) \in \mathbb{W} \subseteq \mathbb{R}^{m (n_y+1)}$ constructed using the sample data, i.e.,
$\mathrm{Q}_t := \sum_{i=1}^N \bm{\delta}_{(\tilde{c}_{t}^{(i)}, \tilde{d}_{t}^{(i)})}$,
where $\bm{\delta}_{\bm{x}}$ denotes the Dirac delta measure concentrated at~$\bm{x}$.
However, the accuracy of the empirical distribution is subject to errors in the learning results. 
To satisfy the risk constraint~\eqref{orig_cvar} even under distribution errors, 
we instead impose the following distributionally robust risk constraint:
\begin{equation}
\sup_{\mathrm{P}_t\in\mathbb{D}_{t}}\cvar_{\alpha}^{\mathrm{P}_t}[\dist(y(t),\mathcal{Y}(t))]\leq \delta. \label{DR_const}
\end{equation}
Here, the left-hand side of the inequality represents the worst-case  CVaR when the joint distribution $\mathrm{P}_{t}$ of $(c_t, d_t)$ lies in a given \emph{ambiguity set} $\mathbb{D}_{t}$. 
Thus, any motion control action that satisfies \eqref{DR_const} can meet the original risk constraint under any distribution error characterized by $\mathbb{D}_t$.
In this work,
we use the following \emph{Wasserstein ambiguity set}: 
\begin{equation}
\mathbb{D}_t:=\{\mathrm{P}\in \mathcal{P}(\mathbb{W})\mid W(\mathrm{P},\mathrm{Q}_t)\leq \theta\},
\end{equation}
where $\mathcal{P}(\mathbb{W})$ denotes the set of Borel probability measures on the support $\mathbb{W}$. 
Here,   $W (\mathrm{P}, \mathrm{Q})$ is the
Wasserstein distance (of order 1) between $\mathrm{P}$ and $\mathrm{Q}$, defined by
\begin{equation} \nonumber
\begin{split}
W (\mathrm{P}, \mathrm{Q}) := \min_{\kappa \in \mathcal{P}(\mathbb{W}^2)} \bigg \{
& \int_{\mathbb{W}^2} \| w - w' \|_2 \; \mathrm{d} \kappa (w, w') \\
 &\mid \Pi^1 \kappa = \mathrm{P}, \Pi^2 \kappa = \mathrm{Q}
\bigg \},
\end{split}
\end{equation}
where $\Pi^i \kappa$ denotes the $i$th marginal of $\kappa$ for $i=1, 2$, where $n:= m (n_y+1)$ is the dimension of $(c_t, d_t)$.
The Wasserstein distance between two probability distributions represents the minimum cost of redistributing mass from one  to another using a non-uniform perturbation.
Using the Wasserstein metric in
distributionally robust optimization and control has recently drawn a great deal of interest because it provides a tractable solution with superior statistical properties such as a probabilistic out-of-sample performance guarantee~\cite{esfahani2018data,Gao2016, Zhao2018, Blanchet2018, Yang2017lcss, Yang2018}.

Using the distributionally robust risk constraint~\eqref{DR_const},
we formulate the following MPC problem:
\begin{subequations}\label{DRMPC}
\begin{align}
\inf_{\mathbf{u, \bm{\xi}, y}} \; & J(\xi(t),\mathbf{u}):=\sum_{k=0}^{K-1} r(\xi_k,u_k)+q(x_K)\label{DRMPCcost}\\
\mathrm{s.t.} \; & \xi_{k+1}=f(\xi_k,u_k) \label{DRMPCcons1}\\
&y_k=h(\xi_k,u_k) \label{DRMPCcons2}\\
&\xi_0=\xi(t)\label{DRMPCcons3}\\
& \sup_{\mathrm{P}_k\in\mathbb{D}_{k}}\; \cvar_{\alpha}^{\mathrm{P}_k}[\dist(y_k,\mathcal{Y}_k)]\leq\delta\label{DRMPCcons4}\\
&\xi_k\in \Xi,\; u_k\in \mathcal{U}, \label{DRMPCcons5}
\end{align}
\end{subequations}
where $\bold{u} := (u_0, \ldots, u_{K-1})$, $\bm{\xi} := (\xi_0, \ldots, \xi_K)$, $\bold{y}:= (y_0, \ldots, y_{K})$, constraint \eqref{DRMPCcons1} and $u_k \in \mathcal{U}$ in \eqref{DRMPCcons5} should hold for $k=0,\dots,K-1$, \eqref{DRMPCcons2} should hold for $k=0,\dots,K$, and all the remaining constraints should be satisfied for $k=1,\dots,K$.
Note that the problem can be extended to consider $L$ obstacles by repeating the constraints \eqref{DRMPCcons4} $L$ times.

The distributionally robust MPC (DR-MPC) problem with GP  is defined in a receding horizon manner for each stage. 
The cost function can be chosen in a way that would guide the robot so it follows a reference trajectory $y^{ref}$ generated by, for example, RRT*~\cite{Karaman2011}:
\begin{equation}
J(\xi(t),\bold{u}):=\|y_{K}-y^{ref}_{K}\|_P+\sum_{k=0}^{K-1}\|y_{k}-y^{ref}_{k}\|_Q+\|u_{k}\|_R,\label{cost}
\end{equation}
where $Q\succeq 0$, $R\succ 0$ are the state and control weighting matrices, respectively; and $P\succeq 0$  is chosen in a way   to ensure stability.
The constraints \eqref{DRMPCcons1} and \eqref{DRMPCcons2} are used for computing the robot state and output over the MPC horizon, specifying the initial state $\xi_0$ as the current state $\xi(t)$ in the constraint \eqref{DRMPCcons3}. 
Most importantly, \eqref{DRMPCcons4} corresponds to the distributionally robust risk constraint, thereby limiting the safety risk by a pre-specified tolerance even when the actual distribution deviates from the  distribution estimated by GP regression within $\mathbb{D}_k$. 
Here, the Wasserstein ambiguity set $\mathbb{D}_k$ is constructed from the joint empirical distribution $\mathrm{Q}_k$ of $(c_k,d_k)$ at each time step $k$. The joint distribution is obtained from GP regression, by learning the obstacle's velocity vector and evolving the obstacle's state according to \eqref{update_eq} from $\mathrm{x}_o(t)$ to $\mathrm{x}_o(t+K)$.
Finally, \eqref{DRMPCcons5} are the state and control constraints given in \eqref{feas_const}.

\subsection{Tractable Reformulation}\label{s4ss2}

Unfortunately, solving the DR-MPC problem~\eqref{DRMPC} is a challenging task because the risk constraint~\eqref{DRMPCcons4} involves an infinite-dimensional optimization problem over the ambiguity set of probability distributions. 
To resolve this issue, we reformulate the DR-MPC problem in a computationally tractable form. 

To begin with, we make use of Lemma~\ref{lem:dist}
 to rewrite the safety risk as
\begin{align}
&\cvar_\alpha[\dist(y,\mathcal{Y})]=\min_{z\in\mathbb{R}}\mathbb{E}\bigg[z+\frac{(\dist(y,\mathcal{Y})-z)^+}{1-\alpha}\bigg]\nonumber\\
&=\min_{z\in\mathbb{R}} \bigg \{z+\mathbb{E}\bigg[\frac{\max\{\min_{j} (c_{j}y + d_{j})-z,-z,0\}}{1-\alpha}\bigg] \bigg \}.\label{cvar1}
\end{align}
Next, the following proposition can be used to reformulate the distributionally robust risk constraint~\eqref{DRMPCcons4} in a conservative manner, which is suitable for our purpose of limiting the risk of unsafety:

\begin{proposition}\label{prop:risk}
Suppose that $\mathbb{W}=\mathbb{R}^{n}$. Then, the following  inequality holds:
\begin{equation}\nonumber
\begin{split}
 \sup_{\mathrm{P}_t\in\mathbb{D}_{t}}\; \cvar_{\alpha}^{\mathrm{P}_t}&[\dist(y(t),\mathcal{Y}(t)]\\
  \leq  \inf_{z,\lambda, s, {\rho}} \; & z+\frac{1}{1-\alpha}\Bigg[\lambda \theta+\sum_{i=1}^{N} s_i\Bigg] \\
    \mbox{\text{s.t.}} \; &\langle \rho_i, \tilde{c}_t^{(i)} y(t)+ \tilde{d}_t^{(i)}\rangle\leq s_i+z\\
    &s_i+z \geq 0\\
    &s_i \geq 0\\
&\sum_{j=1}^m\rho_{i,j}^2 \sum_{l=1}^{n_y}(y_{l}^2+1) \leq \lambda^2\\
&\lambda \geq 0 \\
    &\langle\rho_i,e\rangle=1\\
    &\rho_i\geq 0\\
    &z\in \mathbb{R},
\end{split}
\end{equation}
where all the constraints hold for $i=1,\dots,N$, and $e\in \mathbb{R}^{m}$ is a vector of all ones. $\rho_{i,j}$ represents the $j$th element of $\rho_i$ and $y_l$ is the $l$th element of $y$. 
\end{proposition}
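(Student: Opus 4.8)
The plan is to start from the extremal representation of CVaR in~\eqref{cvar1} and peel off the infinite-dimensional worst case by Kantorovich duality, the source of the (conservative) inequality being a single minimax interchange. Writing $\ell_z(c_t,d_t):=\max\{\min_{j}(c_{t,j} y+d_{t,j})-z,\,-z,\,0\}$, equation~\eqref{cvar1} gives $\cvar_\alpha^{\mathrm P_t}[\dist(y,\mathcal Y)]=\min_{z}\{z+\tfrac{1}{1-\alpha}\mathbb E_{\mathrm P_t}[\ell_z]\}$. First I would bound $\sup_{\mathrm P_t\in\mathbb D_t}\min_z\{\cdots\}\le\min_z\sup_{\mathrm P_t\in\mathbb D_t}\{\cdots\}$ using the max--min inequality; this is the only conservative step and is what forces the proposition to assert ``$\le$'' rather than equality. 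It then remains, for fixed $z$ and $y$, to reformulate the inner worst-case expectation $\sup_{\mathrm P_t\in\mathbb D_t}\mathbb E_{\mathrm P_t}[\ell_z(c_t,d_t)]$.

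Next I would invoke the Wasserstein strong-duality result of~\cite{esfahani2018data}. Since the support is the whole space ($\mathbb W=\mathbb R^{n}$), no support-function term appears, and the worst-case expectation equals $\inf_{\lambda\ge 0}\big\{\lambda\theta+\sum_{i=1}^N\sup_{w}[\ell_z(w)-\lambda\|w-\tilde w^{(i)}_t\|_2]\big\}$, where $\tilde w^{(i)}_t:=(\tilde c^{(i)}_t,\tilde d^{(i)}_t)$. Because $\ell_z$ is a pointwise maximum of three functions, each inner supremum splits as $\max\{A_i,-z,0\}$: the two constant branches attain $-z$ and $0$ at $w=\tilde w^{(i)}_t$ (as $\lambda\ge 0$), which I encode with an epigraph variable $s_i$ through the constraints $s_i+z\ge 0$ and $s_i\ge 0$.

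The heart of the argument is the remaining branch $A_i:=\sup_{w}\big[\min_{j}(c_j y+d_j)-z-\lambda\|w-\tilde w^{(i)}_t\|_2\big]$. I would replace the inner minimum by its linear-programming form $\min_j(c_j y+d_j)=\min_{\rho\in\Delta^m}\langle\rho,\,c y+d\rangle$ over the probability simplex $\Delta^m=\{\rho\ge 0:\langle\rho,e\rangle=1\}$, and then swap the resulting $\sup_w$ and $\min_\rho$ by Sion's minimax theorem, which applies because the bracket is affine (hence concave) in $w$, linear in $\rho$, and $\Delta^m$ is compact and convex. For fixed $\rho$ the objective is affine in $w$ minus $\lambda\|w-\tilde w^{(i)}_t\|_2$; since the conjugate of $\lambda\|\cdot\|_2$ is the indicator of the dual-norm ball, the supremum equals $\langle\rho,\tilde c^{(i)}_t y+\tilde d^{(i)}_t\rangle-z$ when the gradient has Euclidean norm at most $\lambda$ and is $+\infty$ otherwise. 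Computing that gradient blockwise in $(c_j,d_j)$ yields a quadratic constraint of the form $\sum_{j}\rho_{i,j}^2\sum_{l}(y_l^2+1)\le\lambda^2$, while finiteness of $A_i$ and its domination by $s_i$ give $\langle\rho_i,\tilde c^{(i)}_t y+\tilde d^{(i)}_t\rangle\le s_i+z$ together with $\rho_i\in\Delta^m$. Collecting the three branches under $s_i$ and carrying the outer $\min_z$ and $\inf_\lambda$ into a single joint infimum over $(z,\lambda,s,\rho)$ produces exactly the stated program.

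The step I expect to be the main obstacle is the rigorous justification of the two interchanges. The minimax swap for $A_i$ must be verified via Sion's theorem, noting that $A_i=+\infty$ precisely when the dual-norm constraint fails (consistently across the two orders), and the Kantorovich duality must be checked to hold with equality and attainment for the unbounded, piecewise-affine integrand $\ell_z$ on $\mathbb R^{n}$; since the growth of $\ell_z$ is at most linear, the hypotheses of~\cite{esfahani2018data} are met. Once these are in place, the only inequality in the chain is the initial $\sup$--$\min$ exchange, so the intervening reformulations are exact and the displayed upper bound follows.
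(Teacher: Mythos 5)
Your proposal is correct and follows essentially the same route as the paper, whose ``proof'' is just a citation chain to exactly the ingredients you reconstruct: the CVaR extremal representation, the max--min interchange as the single conservative step, Wasserstein (Kantorovich) duality in the form of \cite[Theorem 4.2]{esfahani2018data} for a pointwise maximum of concave pieces, and the simplex/dual-norm treatment of the $\min_j$ branch as in \cite[Proposition 1]{hakobyan2020wasserstein}.

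One imprecision is worth flagging: the blockwise gradient of $w \mapsto \langle \rho, c\,y + d\rangle$ in $(c_j,d_j)$ is $(\rho_j y, \rho_j)$, so its exact squared Euclidean norm is $\sum_j \rho_j^2\big(\sum_l y_l^2 + 1\big)$, not $\sum_j \rho_j^2 \sum_l (y_l^2+1) = \sum_j \rho_j^2\big(\sum_l y_l^2 + n_y\big)$ as you (and the proposition) state; these differ whenever $n_y>1$. Your claim that the gradient computation ``yields'' the stated constraint is therefore not literally true. The fix is one line: the stated constraint is \emph{stronger} than the exact dual-norm condition, so it shrinks the feasible set and can only increase the infimum on the right-hand side, which preserves the asserted inequality $\sup \le \inf$ a fortiori. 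With that remark added, your argument is complete.
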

Its proof follows directly from Lemma 2, \cite[Proposition 1]{hakobyan2020wasserstein}, and~\cite[Theorem 4.2]{esfahani2018data}.
The assumption that $\mathbb{W} = \mathbb{R}^n$ can be relaxed using~\cite[Proposition 1]{hakobyan2020wasserstein}.
Note that the optimization problem on the right-hand side is finite-dimensional, unlike the original one on the left-hand side. 
Thus, by limiting this upper-bound of the distributionally robust safety risk instead of \eqref{DRMPCcons4}, we can completely remove the infinite-dimensionality issue inherent in the original DR-MPC problem~\eqref{DRMPC}. 

Specifically, according to Proposition~\ref{prop:risk}, the DR-MPC problem~\eqref{DRMPC} can be reformulated  as follows:
\begin{subequations}
\begin{align}
\inf_{\substack{\bold{u}, \bm{\xi},\bold{y},\bold{z},\\{\lambda}, {s}, {\rho}}} \;  & J(\xi(t), \bold{u}) 
:= \sum_{k=0}^{K-1} r (\xi_k, u_k)+q(\xi_K) \\
\mbox{s.t.} \; & \xi_{k+1} = f(\xi_k,u_k) \label{GPDRMPC_const1}\\
& y_k=h(\xi_k,u_k)\label{GPDRMPC_const2}\\
& \xi_0 = \xi(t)\label{GPDRMPC_const3} \\
&z_{k}+\frac{1}{1-\alpha}\Bigg[\lambda_{k} \theta+\frac{1}{N_k}\sum_{i=1}^{N_k} s_{k,i}\Bigg]\leq \delta\label{GPDRMPC_const4} \\
&\langle\rho_{k,i}, \tilde{c}^{(i)}_{k}y_k +\tilde{d}_{k}^{(i)}\rangle \leq s_{k,i}+z_{k} \label{GPDRMPC_const5}\\
& s_{k,i}+z_{k} \geq 0 \label{GPDRMPC_const6}\\
& s_{k,i} \geq 0\label{GPDRMPC_const7} \\
&\sum_{j=1}^m\rho_{k,i,j}^2 \sum_{l=1}^{n_y}(y_{k,l}^2+1) \leq \lambda_k^2\label{GPDRMPC_const8}\\
&\lambda_k \geq 0\label{GPDRMPC_const9} \\
&\langle\rho_{k,i},e\rangle=1 \label{GPDRMPC_const10}\\
&\rho_{k,i}\geq 0\label{GPDRMPC_const11} \\
&z_{k}\in\mathbb{R},\label{GPDRMPC_const12}\\
&\xi_k\in\Xi,\; u_k\in\mathcal{U},\label{GPDRMPC_const13}
\end{align}\label{GPDR_MPC}
\end{subequations}
where \eqref{GPDRMPC_const1} and $u_k \in \mathcal{U}$ in \eqref{GPDRMPC_const13} should hold for $k=0,\dots,K-1$, \eqref{GPDRMPC_const2} should hold for $k=0,\dots,K$, and all the other constraints should be satisfied for $k=1,\dots,K$ and $i=1,\dots,N$.
As desired, the reformulated problem is finite-dimensional unlike the original one~\eqref{DRMPC}.
However, it is a nonconvex optimization problem due to the constraints \eqref{GPDRMPC_const5} and \eqref{GPDRMPC_const8} even when the system dynamics and the output equation are affine and the cost function is convex. 
A locally optimal solution to this problem can be efficiently computed by using existing nonlinear programming algorithms such as interior-point methods (e.g., \cite{Nocedal2006}).

\begin{algorithm}[t]
\SetKw{Input}{Input:}
\Input $\xi(t), \mathrm{x}_o(t),  \hat{\mathrm{x}}^{(i)}, \hat{\mathrm{v}}^{(i)}$, $i=1, \ldots, M$;\\
$\mathcal{D}_j :=\big\{(\hat{\mathrm{x}}^{(i)},\hat{\mathrm{v}}_j^{(i)} ),\;i=1,\dots,M\big\},\;j=1,\dots,n_{\mathrm{x}}$\;
Initialize $\mu_{\mathrm{x}}(0) :=\mathrm{x}_o(t)$, $\Sigma_\mathrm{x}(0):=\mathbf{0}$\;
\For{$k=0:K-1$}{
Compute $\tilde{\mu}_\mathrm{v}(k)$, $\tilde{\Sigma}_\mathrm{v}(k)$ and $\tilde{\Sigma}_{\mathrm{xv}}$ from \eqref{Taylor_app}\;
Update $\mu_\mathrm{x}(k+1)$ and $\Sigma_\mathrm{x}(k+1)$ from \eqref{update_eq}\;
Generate a sample $\{\tilde{\mathrm{x}}^{(1)}_o(k+1),\dots,\tilde{\mathrm{x}}^{(N)}_o(k+1)\}$ from $\mathcal{N}(\mu_\mathrm{x}(k+1),\Sigma_\mathrm{x}(k+1))$\;
Compute $\tilde{c}_{k+1}^{(i)}$ and $\tilde{d}_{k+1}^{(i)}$,  $i=1,\dots,N$ using \eqref{gG_eq} and \eqref{cd}\;
}
Solve \eqref{GPDR_MPC} to obtain $\bold{u^*}$\;
\Return $u(t)=u_0^*$\;
\caption{Learning-based DR-MPC at stage $t$}\label{GPDR_MPC_alg}
\end{algorithm}

The overall  learning-based DR-MPC at stage $t$ is shown in Algorithm \ref{GPDR_MPC_alg}. 
At each stage, the current states of the robot and the obstacle as well as $M$ past observations $\{(\hat{\mathrm{x}}^{(i)}, \hat{\mathrm{v}}^{(i)}) \}_{i=1}^M$ of the obstacle's position and velocity
are taken as the input data. 
Then, the obstacle's movement for future stages is learned by GP regression, and is used in the DR-MPC problem \eqref{GPDR_MPC}. The first element of locally optimal solution $\bold{u}^*$ is taken as  the motion control action for the robot at the current stage. 
Note that at stage $t = 0$, the dataset $\mathcal{D}$ consists of all zeros. As time goes on, new observations are added to  the  dataset for GP regression.  During the update, old observations are removed so that only $M$ latest data are stored.

\section{Experiment Results}\label{s5}

In this section, we present simulation results to demonstrate the performance of our motion control method. 
In our experiments, we consider a car-like vehicle navigating a 2D environment with the following bicycle dynamics~\cite{polack2017kinematic}:
\begin{equation}
\begin{split}
&x^v(t+1)=x^v(t)+T_s v^v(t) \cos(\theta^v(t)+\beta^v(t))\\
&y^v(t+1)=y^v(t)+T_s v^v(t)\sin(\theta^v(t)+\beta^v(t))\\
&\theta^v(t+1)=\theta^v(t)+T_s v^v(t)\frac{\sin(\beta^v(t))}{l_{r}}\\
&\beta^v(t+1)=\beta^v(t)+T_s \tan^{-1}\Big(\frac{l_{r}}{l_{r}+l_{f}}\tan\delta(t)\Big),
\end{split}\label{veh_dyn}
\end{equation}
where $x^v(t)$ and $y^v(t)$ are the coordinates of the vehicle's center of gravity, $\theta^v(t)$ is the heading angle, $\beta^v(t)$ is the current velocity angle. The control inputs are  velocity $v^v(t)$ and steering angle $\delta^v(t)$. The coefficients $l_{f}$ and $l_{r}$ represent the distances from the center of gravity to the front and rear wheels, respectively. Throughout the simulations, we assume that $l_{f}=l_{r}=2$.
We also impose the following state and control constraints:
\begin{equation*}
v_v(k)\in [0,30],\quad u_v(k)\in [-\pi/6,\pi/6] \quad \forall k.
\end{equation*}

The vehicle is controlled to follow the centerline of the track, while avoiding two dynamic obstacles. 
The centerline is thus taken as the reference trajectory $y^{ref}$ in \eqref{cost}. The two obstacles are  rectangular car-like vehicles with size $2\times 1$. It is straightforward to check that for both obstacles $g_k$ and $G_k$ are easily found from the state that consists of the vehicle's center of mass and its heading angle. 
 In our experiments, we set $Q=P=I$ and $R=0.01 I$. 
 The sampling time $T_s$ and $T_o$ are set to be $0.01$, and the MPC horizon is chosen as $K=5$. The risk tolerance level and the confidence level were selected as $\delta=0.01$ and $\alpha=0.95$, respectively.

To evaluate the performance of learning-based DR-MPC, we compare it to its non-robust counterpart obtained by sample average approximation (SAA)~\cite{hakobyan2019risk}.
All the simulations were conducted on a PC with 3.70 GHz Intel Core i7-8700K processor and 32 GB
RAM. The optimization problem was modeled in AMPL~\cite{fourer1990modeling} and solved using interior-point method-based solver IPOPT~\cite{wachter2006implementation}.

\begin{figure*}[t]
\begin{subfigure}{0.48\linewidth}
  \centering
  \includegraphics[width=\linewidth]{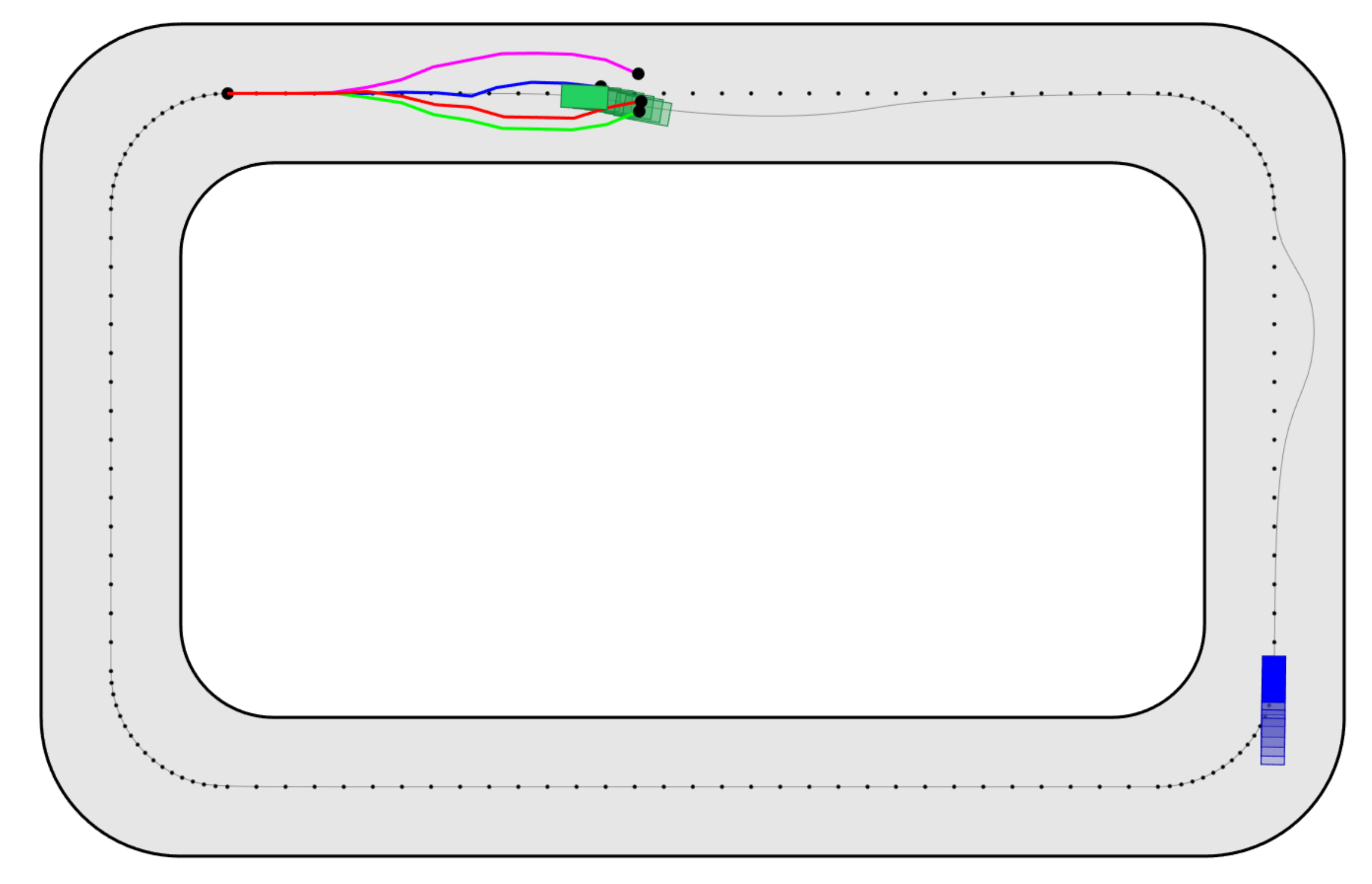}
  \subcaption{$t=13$}
  \label{exp11}
\end{subfigure}%
\begin{subfigure}{0.48\linewidth}
  \centering
  \includegraphics[width=\linewidth]{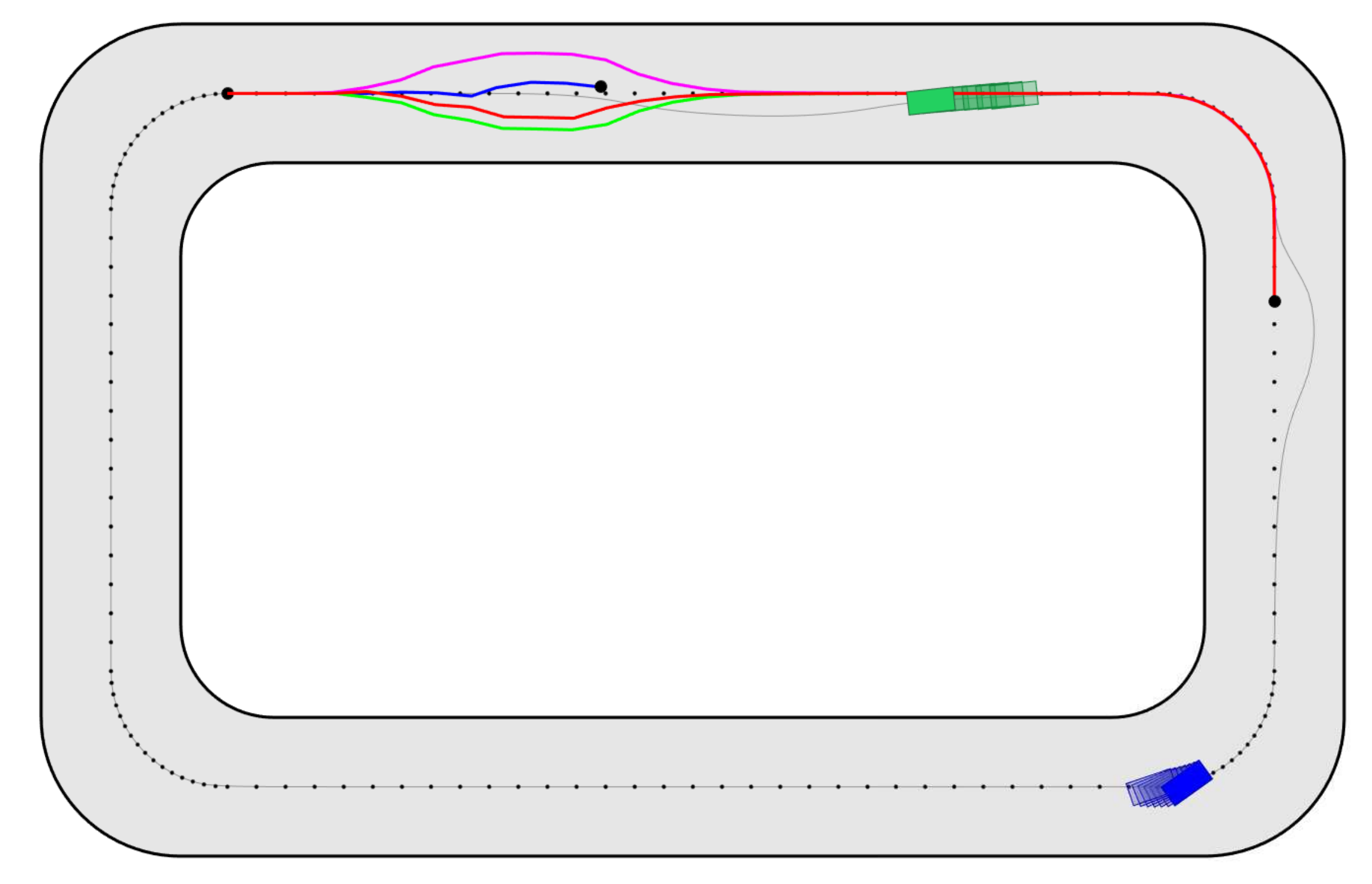}
    \subcaption{$t=38$}
    \label{exp12}
\end{subfigure}\\
\begin{subfigure}{0.48\linewidth}
  \centering
  \includegraphics[width=\linewidth]{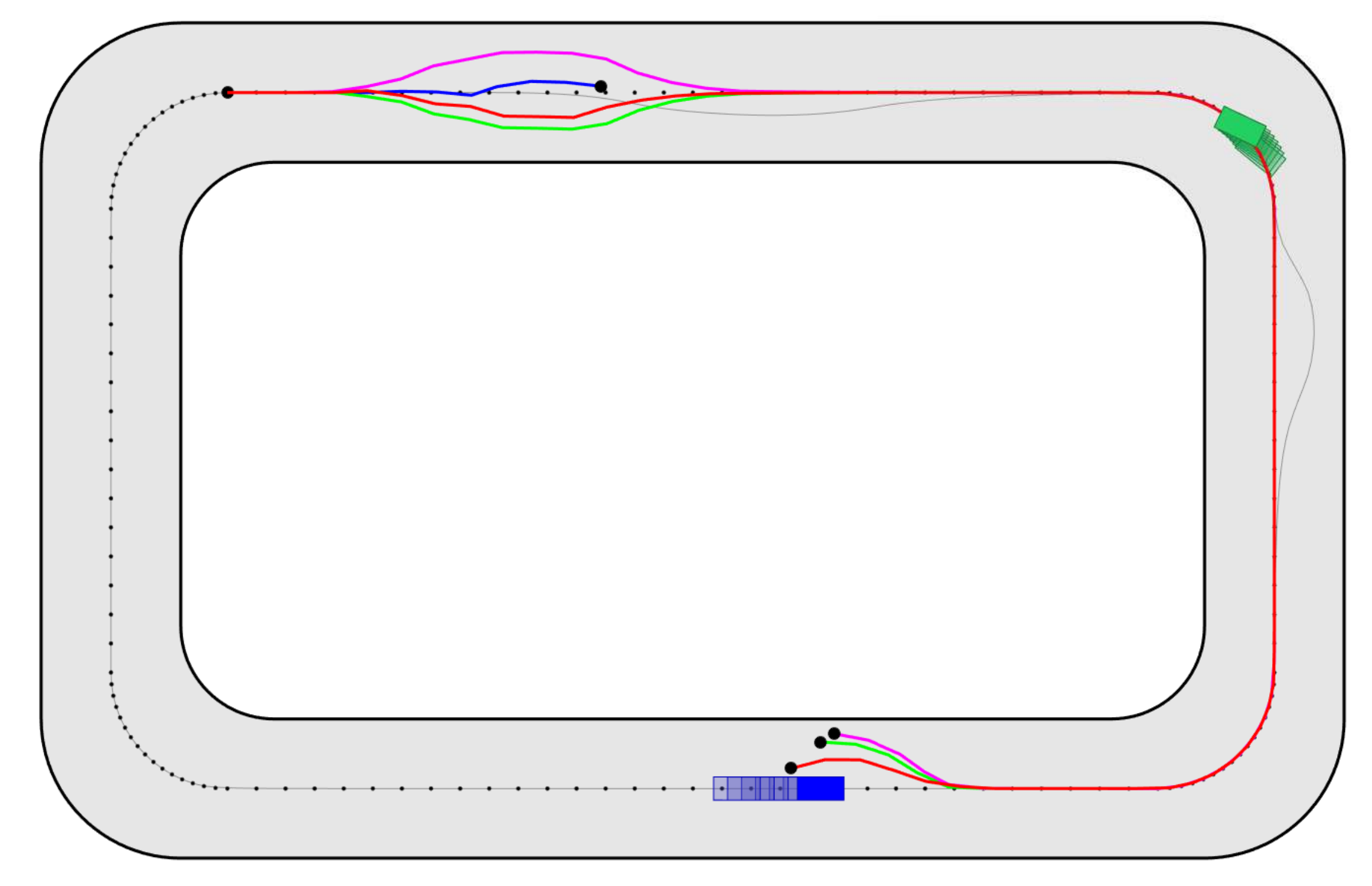}
    \subcaption{$t=67$}
    \label{exp13}
\end{subfigure}%
\begin{subfigure}{0.48\linewidth}
  \centering
  \includegraphics[width=\linewidth]{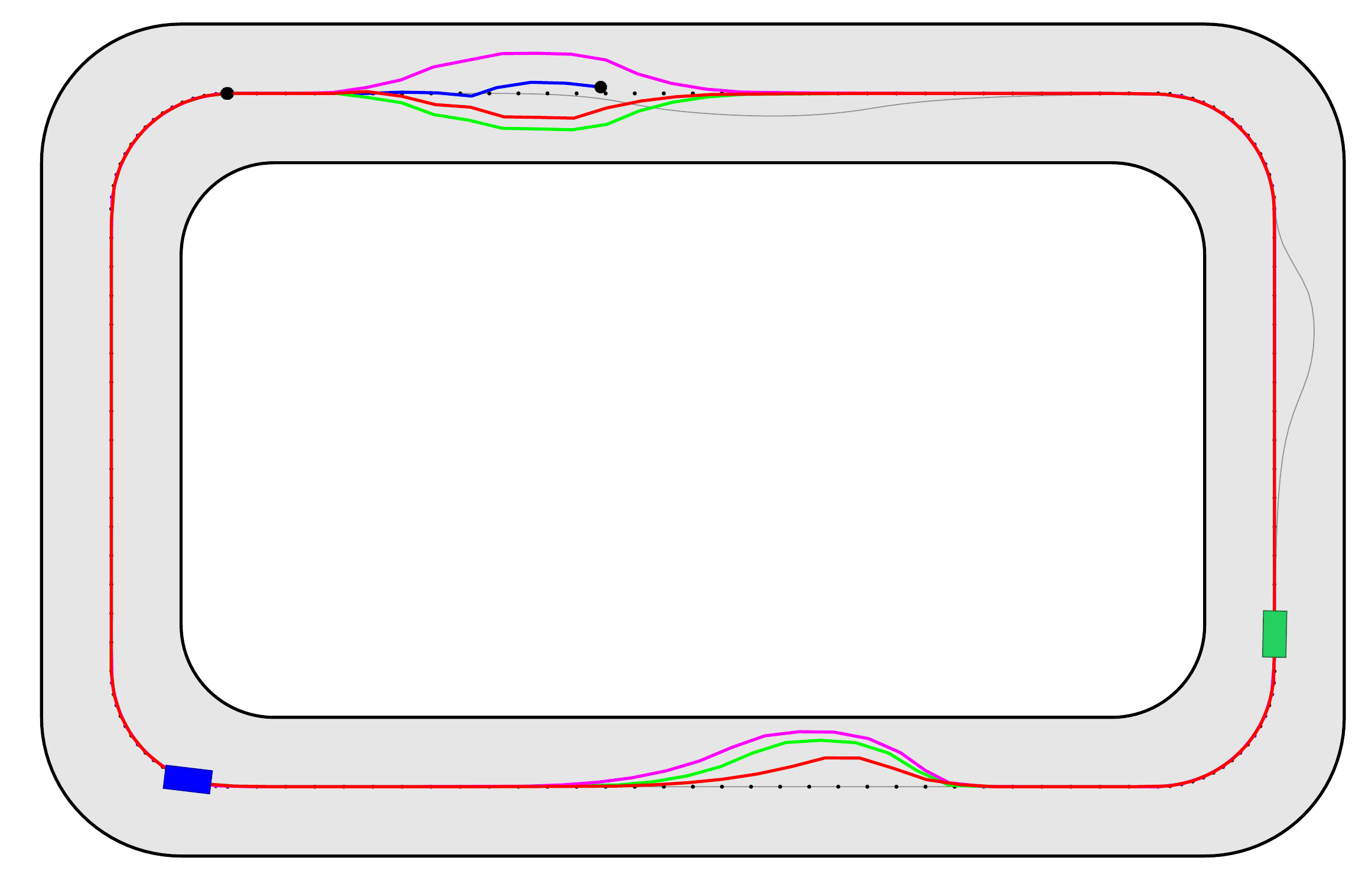}
    \subcaption{$t=114$}
    \label{exp14}
\end{subfigure}\\
\begin{subfigure}{\linewidth}
  \centering
  \vspace{0.1in}
  \includegraphics[width=0.6\linewidth]{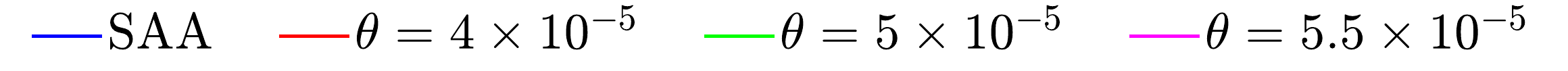}    
\end{subfigure}
\caption{The  trajectories of the vehicle controlled by SAA-MPC and DR-MPC
with $\theta=4\times 10^{-5},5\times 10^{-5}$, and $5.5\times 10^{-5}$. 
The current vehicle position is marked with a black dot. 
The green and blue rectangles represent the two obstacles, while the the transparent ones are the $K$ steps-ahead prediction of the obstacles, obtained via GP regression. 
The reference centerline for the vehicle is displayed with points, while
the thin grey curve is the actual trajectory of the obstacles.}\label{exp1}
\end{figure*}

Fig.~\ref{exp1} shows the resulting trajectories for different sizes of the Wasserstein ambiguity set  compared to the SAA version (SAA-MPC) with $N=50$ samples. At each stage, the dataset for GP regression is updated to keep only the latest $M=20$ observations.

In early stages, the robotic vehicle follows the centerline while predicting the future motion of the obstacles. As shown in Fig.~\ref{exp11}, when reaching one obstacle that abruptly changes its heading angle
at $t=13$, the vehicle tries to avoid it. 
In the case of SAA-MPC, the vehicle collides with the obstacle
because the distributional information learned by GP regression is inaccurate. 
As a result, the risk constraint is violated and the MPC problem becomes infeasible. 
Meanwhile, the vehicle controlled by our method successfully bypasses the obstacle. 
The safety margin increases with the radius $\theta$ of the Wasserstein ambiguity set.

Fig.~\ref{exp12} shows the situation at $t=38$, where the vehicle controlled by our method continues to follow the reference trajectory for all $\theta$'s. 
Meanwhile, the GP is not well enough to be able to predict the motion of the obstacles around the corners,  although it shows a good performance when there is no sudden change in the obstacle's movement.
As shown in Fig.~\ref{exp13},
at $t=67$ the second obstacle  interferes with the path of the vehicle. Similar to the previous obstacle, the vehicle controlled by DR-MPC avoids the obstacle for all $\theta$'s. 
In the case of the smallest radius $\theta=4\times 10^{-5}$, the vehicle chooses to take aggressive action while satisfying the risk constraint. 
As the Wasserstein ambiguity set increases, i.e., $\theta$ increases, 
the robot makes a more conservative (i.e., safer) decision, inducing a bigger safety margin. 
Fig.~\ref{exp14} displays the trajectories for all cases after the vehicle completes one lap.  Note that only the non-robust SAA version failed to complete the lap due to collision, while our method succeeded to do so for all $\theta$'s.  

In summary,  we conclude that the proposed distributionally robust method successfully preserves safety even with moderate errors in the learning results.
In the case of very small ambiguity sets (e.g., $\theta = 4\times 10^{-5}$), the resulting control action may be too aggressive to guarantee safety when the learning errors are significant.
Whereas, for $\theta=5.5\times 10^{-5}$, the vehicle deviates too much from the reference trajectory, inducing a large cost. 
Based on our experiments, $\theta=5\times 10^{-5}$ may be selected for a good tradeoff between safety and cost.

Table~\ref{Table1} shows the accumulated cost and the amount of time for completing one lap on the track, and the average computation time required for solving a single DR-MPC problem~\eqref{GPDR_MPC}. As expected, both of the total cost and the lap time increase with $\theta$ since the vehicle controlled by DR-MPC with larger $\theta$ is more conservative and deviates further from the reference trajectory. 
Computation time is small in all cases although a nonconvex optimization problem is solved in each iteration. This result shows the potential of using our distributionally robust method in real-time applications.  

\begin{table}[!t]
\caption{Accumulated cost, lap time, and average computation time for the nonlinear car-like vehicle motion control with $N=50$, $\delta=0.01$, and $\alpha=0.95$.}
\centering
\setlength{\tabcolsep}{0.5em} 
\begin{tabular}{>{\raggedright\arraybackslash} m{2.6cm}| >{\centering\arraybackslash} m{0.8cm} | c c c}
\hline
& \multirow{2}{*}{SAA} & \multicolumn{3}{c}{DR-MPC ($\theta$)}\\
\cline{3-5}
& & $4\times 10^{-5}$ & $5\times 10^{-5}$ & $5.5\times 10^{-5}$ \\
\hline\hline
\bf{Accumulated Cost} & $+\infty$& 491.79 & 594.68 & 703.59 \\
\hline
\bf{Lap Time (sec)} & -& 105.26 & 109.45 & 110.31\\\hline
\bf{Avg. Run Time (sec)}  & -& 0.6572 & 0.6767 & 0.6942 \\\hline
\end{tabular}
\label{Table1}
\end{table}

\section{Conclusion}
We have proposed a distributionally robust decision-making tool for safe motion control of robotic vehicles in an environment with dynamic obstacles. 
Our DR-MPC method limits the risk of unsafety even with moderate errors in the obstacle's motion predicted by
GP regression. 
For computational tractability, we have also developed a reformulation approach exploiting modern distributionally robust optimization techniques.
The experimental results demonstrate the safety-preserving capability of our method under  moderate learning errors
and the potential for real-time application. 
In the future, the proposed method can be extended to enhance the capability of fast adaptive reactions, especially when considering sudden motion changes, and to address partial observability.

\bibliographystyle{IEEEtran}\bibliography{reference} 

\end{document}